\newcommand{\abs}[1]{\left| #1 \right|}
\DeclarePairedDelimiter{\norm}{\lVert}{\rVert}
\def\C{\mathcal{C}}
\def\P{\mathcal{P}}
\def\H{\mathcal{H}}
\def\sv{\varphi}
\def\dst{\displaystyle}
\def\polylog{\mathrm{polylog}}
\newtheorem{theorem}{Theorem}[section]
\newtheorem{lemma}[theorem]{Lemma}
\newtheorem{observation}[theorem]{Observation}
\newtheorem{definition}[theorem]{Definition}
\newcommand{\myparagraph}[1]{\noindent {\bf #1}}
\title{Differentially Private Shapley Values for Data Evaluation}
\author{%
  Lauren Watson \\
  School of Informatics\\
  University of Edinburgh\\
  \texttt{lauren.watson@ed.ac.uk} \\
   \And
   Rayna Andreeva \\
   School of Informatics \\
   University of Edinburgh \\
   \texttt{r.andreeva@sms.ed.ac.uk} \\
   \AND
   Hao-Tsung Yang \\
   School of Informatics \\
   University of Edinburgh \\
   \texttt{haotsungyang@gmail.com} \\
   \And
   Rik Sarkar \\
   School of Informatics \\
   University of Edinburgh \\
   \texttt{rsarkar@inf.ed.ac.uk} \\
}
\begin{document}

\maketitle

\begin{abstract}
    The Shapley value has been proposed as a solution to many applications in machine learning, including for equitable valuation of data. Shapley values are computationally expensive and involve the entire dataset. The query for a point's Shapley value can also compromise the statistical privacy of other data points. We observe that in machine learning problems such as empirical risk minimization, and in many learning algorithms (such as those with uniform stability), a diminishing returns property holds, where marginal benefit per data point decreases rapidly with data sample size. Based on this property, we propose a new stratified approximation method called the Layered Shapley Algorithm. We prove that this method operates on small ($O(\polylog(n))$) random samples of data and small sized ($O(\log n)$) coalitions to achieve the results with guaranteed probabilistic accuracy, and can be modified to incorporate differential privacy. Experimental results show that the algorithm correctly identifies high-value data points that improve validation accuracy, and that the differentially private evaluations preserve approximate ranking of data. 
\end{abstract}

\section{Introduction}

Large-scale machine learning and data mining depend on data contributed by various individuals and institutions. With the popularity of such data-driven systems, there is increasing awareness of the value of data and associated privacy risks. As data sharing and data marketplaces become common, it is necessary to accurately evaluate a contributor's data to provide them with the right compensation. On the other hand, identifying high-value data is also of advantage to other stakeholders such as data users. For these purposes, the Shapley value has been proposed as a fair method of determining the value of each data point~\cite{jia2019efficient,liu2021gtg}. 

The Shapley value is a concept from game theory~\cite{shapley1953value}, defined to evaluate the contribution of individual players in a cooperative game. 
The concept is very general and can be applied to complex setups where multiple elements interact to produce results. Thus, it has been applied to understanding different elements in machine learning and is a popular tool in interpretable machine learning~\cite{lundberg2017unified}. It is used to determine the importance of individual features~\cite{cohen2007feature,tripathi2020interpretable}, neurons~\cite{ghorbani2020neuron}, models in ensembles~\cite{rozemberczki2021shapley}, data points~\cite{ghorbani2019data, jia2019efficient} and many others (see~\cite{rozemberczki2022shapley}). However, the use of Shapley values is challenging from the perspectives of computation and privacy. 

The computation of Shapley values requires evaluating the marginal contribution of a player (for us, a data point) with respect to all possible coalitions (subsets) of players (See Section~\ref{sec:prelim}). The computational problem is $\#P$-complete~\cite{deng1994complexity}. Additionally, each such evaluation involves training and evaluating a model. Approximations based on Monte Carlo sampling~\cite{maleki2013bounding} of coalitions can reduce the cost to $O(n\log n)$ evaluations, which is still prohibitive in large datasets. The reduction of computation (number of evaluations) has been the focus of several recent works on the Shapley value of data~\cite{ghorbani2019data,jia2019efficient,kwon2021efficient}. 

From the privacy perspective, the Shapley value poses a complex challenge, since the contribution of a data point is influenced by the configuration of all other data in the set. Our objective is to answer a query for $\sv_i$: Shapley value of data point $i$, which will require access to the entire dataset. Even Monte Carlo methods that sample random coalitions require the use of almost all data points (Section~\ref{sec:algorithm}). Conversely, answering a query for $\sv_i$ can leak the privacy of any other data point $j$. To address this challenge, we develop an algorithm specifically for data evaluation that heavily samples smaller coalitions. This algorithm is more compatible with differential privacy and needs to access only a small fraction of data. Thus it is more privacy-friendly than existing methods. 

\myparagraph{Our contributions.} Our approach is based on diminishing marginal gains with increasing data volume. In machine learning problems, larger training datasets are desirable, but the incremental benefit per data point decreases with increasing data size. This effect has been seen in past experiments (e.g.~\cite{Sun_2017_ICCV} -- Fig. 4). We show theoretically, that for an empirical risk minimization (ERM) problem, the marginal reduction in loss per data point is inversely proportional to the data size i.e. $O(1/n)$ (Subsection~\ref{sec:diminishing-return}). Thus, each new data point contributes less to the objective of loss minimization in larger datasets.  Similar theoretical bounds on marginal differences hold for uniformly stable algorithms such as regularised ERM. 

Using the property that the marginal utility of a data point is bounded by $O(\frac{1}{m})$ for coalitions of size $m$, we devise a Shapley value computation algorithm in Subsection~\ref{sec:layered-shapley}. This algorithm stratifies the coalitions into layers and samples the layers with varying probability. We call this the {\em Layered Shapley Algorithm}. The algorithm heavily samples the lower layers with small coalitions, and sparsely samples the higher layers with large coalitions. The intuition is that, given the diminishing returns property, small coalitions provide sufficient information on a data point's utility. Not much remains to be gained by examining large coalitions, where the marginal utility is anyway guaranteed to be small. This algorithm relies on evaluating $O(\ln n)$ coalition samples and uses only a $O(\frac{\ln^2 n}{n})$ fraction of the dataset. It is thus highly efficient in the number of evaluations and data usage. 

In Subsection~\ref{sec:dp-shapley} we discuss the differentially private Shapley value computation based on the Layered Shapley Algorithm. We use the strong sampling property of the algorithm with sampling-based privacy amplification results to get differential privacy at the price of relatively small noise. 

Some properties of the Layered algorithm and related results are discussed in Subsection~\ref{sec:layered-properties}. We observe that the bias towards evaluating smaller coalitions significantly helps the computational costs since small coalitions are cheaper to train. The $O(\ln^2 n)$ data points and $O(\ln n)$ coalitions needed to compute a value $\sv_i$, can be saved and used to answer Shapley value queries on the same dataset. Thus, the Layered algorithm produces a natural small {\em core set} for querying Shapley values. Finally, we argue that in the realistic case where a contributor may submit a set of points, the aggregate evaluation of the set can be carried out at a small relative cost. 

Experimental results are discussed in Section~\ref{sec:experiments}, where we demonstrate that Shapley values calculated via the Layered Shapley Algorithm, and their differentially private counterparts, successfully describe the relative utility of data points within multiple binary classification tasks despite their reliance on small coalition sizes. The private algorithm approximately preserves the relative ranks of data points compared to the non-private version. Related works are discussed in Section~\ref{sec:related}. Proofs of theorems can be found in the appendix. In the next section, we start by reviewing the background on Shapley value and differential privacy. Readers familiar with the topics may want to quickly skim the section to note the definitions and notations.

\section{Preliminaries}
\label{sec:prelim}

\subsection{The Shapley value}

The Shapley Value was originally developed to evaluate the contributions of different players in a cooperative game~\cite{shapley1953value}.  In machine learning, the set of players are usually elements of the input to the training algorithm. For example, the input features can be treated as players to estimate their relative importance. Analogously, to evaluate the relative importance of different data points, they will be treated as players and the Shapley value of a data point will represent its importance in the training process. 

In a game with $n$ players (which may be $n$ features or $n$ data points as the case may be), the Shapley value of player $i$, written as  $\sv_i$, is defined in terms of their marginal contributions to coalitions of other players. Suppose $N$ is the set of $n$ players, and  $\C=2^N$ is the set of all possible subsets (coalitions) of players. The utility obtained by any coalition $C$ is given by a value function $v$, and the marginal contribution of player $i$ with respect to $c$ is written as $v_i (C) = v(C\cup\{i\}) - v(C)$. The Shapley value is then defined by: 
\begin{equation}\label{eq:shapley}
   \sv_i(v) = \frac{1}{n}\dst\sum_{C\subseteq N\setminus\{i\}} {n-1 \choose \abs{C}}^{-1}\cdot v_i(C).
\end{equation}
Observe that this definition is equivalent to computing the average marginal gains of $i$ over coalitions of each possible size, and then averaging over all possible sizes.

The appeal of the Shapley value is that it provides a fair allocation of credit, more meaningful than simple marginal contributions. This fairness is characterized by several intuitive properties, such as efficiency, symmetry, null player, and linearity. Shapley value is the unique valuation function that satisfies all these. See the  survey~\cite{rozemberczki2022shapley} for details of these properties. 

In the context of machine learning, $v$ is often defined in terms of the loss function, measuring how much an element $i$ helps in minimizing loss. In the typical data evaluation problems~\cite{ghorbani2019data,ghorbani2020distributional,jia2019towards}, each data point is treated as a player, and $h$ is the model trained on coalition $C$. If $L_C(h)$ is the loss of $h$ on $C$, then $v(C)$ can be defined as $v(C)=-L_C(h)$.  Thus, the Shapley value $\sv_i$ is larger for data points that help more in minimizing the loss. Both the training or empirical loss~\cite{tripathi2020interpretable} and validation loss~\cite{jia2019towards} have been used to define $v$ in machine learning research.

In the feature evaluation problem~\cite{guyon2003introduction,fryer2021shapley}, each feature is treated as a player, and for a subset $C$ of features, $v(C)$ is defined analogously in terms of the loss.

\subsubsection{Approximate computation of Shapley value} 
\label{sec:basic-shapley-approx}
The definition of the Shapley value requires computing $v_i(C)$ for an exponential number of coalitions, making it computationally expensive. The typical approach to tractable computation is to perform a Monte Carlo estimate over the set of coalitions. Suppose $\pi$ is a permutation of $N$, taken uniformly at random, and $\P^\pi_i$ is the set of items occuring before $i$ in $\pi$. Then the basic sampling based algorithm~\cite{castro2009polynomial} computes the average marginal gain over a sample of such subsets to obtain the approximate Shapley value: $\hat{\sv}_i =\frac{1}{m} \sum_{j=1}^m v_i(\P^\pi_i)$. In the case of all $v_i(C)$ being bounded by a constant $c$, the sample complexity of $m\geq \left\lceil\dst \frac{\ln{(\frac{2}{\beta})c^2}}{2\alpha^2} \right\rceil$ achieves an $(\alpha,\beta)$-approximation guarantee~\cite{maleki2013bounding}: $\Pr(\abs{\hat\sv_i - \sv_i}\geq \alpha)\leq \beta$. 

\subsection{Differential Privacy}
The privacy of data points $z\in D$ is at risk even when computing a seemingly complex aggregate value such as a machine learning model~\cite{shokri2017membership}, or in our case a Shapley value. The computation of Shapley value $\sv_i$ uses every other data value $j$ and thus risks their privacy. Differential privacy~\cite{Dwork06differentialprivacy} is designed to defend against such privacy leaks. It provides a statistical privacy guarantee for all data points $z\in D$ by ensuring that the value is statistically insensitive to the presence or absence of individual data points.
\begin{definition}[\bf Neighbouring Databases]
Two databases $D, D'$ are neighbouring if $H(D, D') \leq 1$, where $H(\cdot, \cdot)$ represents the hamming distance. 
\end{definition}
\begin{definition}[\bf Differential Privacy~\cite{Dwork06differentialprivacy}]
  A randomized algorithm $M$ satisfies $\epsilon$-differential privacy if for all neighbouring databases $D$ and $D'$ and for all possible outputs $O\subseteq \text{Range}(M)$, \mbox{
  $
      \Pr[M(D) \in O] \leq e^{\epsilon} \cdot \Pr[M(D') \in O].
  $}
\end{definition}

The sensitivity of a function $f$ is defined to be the maximum change in the function value between neighboring databases: $\Delta f = \max_{D, D' \in \mathcal{D}} \lvert f(D)-f(D') \rvert$. The sensitivity determines the appropriate scale of noise to add to $f$ to achieve differential privacy, as follows: 

\begin{theorem}[Laplace Mechanism] Given a function $f:(\mathcal{X}\times\mathcal{Y})^n\rightarrow \mathbb{R}^k$, the Laplace Mechanism releasing $f(D)+r, r\stackrel{k}{\sim}  Lap(0,\frac{\Delta f}{\epsilon})$ satisfies $\epsilon$-differential privacy. 
\end{theorem}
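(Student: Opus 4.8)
The plan is to establish a pointwise likelihood-ratio bound between the two output densities on neighbouring inputs, and then integrate it over the target event $O$. Fix neighbouring databases $D, D'$ and write $x = f(D)$, $y = f(D')$ in $\reals^k$. Because each coordinate of the noise vector $r$ is an independent draw from $Lap(0, b)$ with scale $b = \Delta f / \epsilon$, the output $M(D) = f(D) + r$ has a product density; evaluated at a point $z \in \reals^k$ it is $p_D(z) = \prod_{i=1}^k \frac{1}{2b}\exp(-\abs{z_i - x_i}/b)$, and likewise $p_{D'}$ with $x$ replaced by $y$.

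First I would form the ratio of these densities at an arbitrary $z$. The normalising constants cancel, leaving
\begin{equation*}
    \frac{p_D(z)}{p_{D'}(z)} = \prod_{i=1}^k \exp\!\left(\frac{\abs{z_i - y_i} - \abs{z_i - x_i}}{b}\right) = \exp\!\left(\frac{1}{b}\sum_{i=1}^k \left(\abs{z_i - y_i} - \abs{z_i - x_i}\right)\right).
\end{equation*}
The key step is to control each summand by the reverse triangle inequality, $\abs{z_i - y_i} - \abs{z_i - x_i} \leq \abs{x_i - y_i}$. Summing over coordinates gives $\sum_i \left(\abs{z_i - y_i} - \abs{z_i - x_i}\right) \leq \sum_i \abs{x_i - y_i} = \norm{x - y}_1 \leq \Delta f$, where the final inequality is the definition of sensitivity. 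Substituting $b = \Delta f / \epsilon$ then yields $p_D(z)/p_{D'}(z) \leq \exp(\Delta f / b) = e^\epsilon$ for every $z$.

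Finally I would integrate this pointwise bound over any measurable output set $O \subseteq \reals^k$: since $p_D(z) \leq e^\epsilon\, p_{D'}(z)$ holds everywhere, $\Pr[M(D)\in O] = \int_O p_D(z)\,dz \leq e^\epsilon \int_O p_{D'}(z)\,dz = e^\epsilon \Pr[M(D')\in O]$, which is exactly $\epsilon$-differential privacy. The one point demanding care — and essentially the only obstacle — is the interpretation of $\Delta f$ for vector-valued $f$: the coordinatewise triangle inequality forces the aggregation $\sum_i \abs{x_i - y_i}$ to appear, so the argument goes through precisely when $\Delta f$ is read as the $\ell_1$ sensitivity $\max_{D,D'}\norm{f(D)-f(D')}_1$ rather than a coordinatewise maximum. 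For $k=1$ this coincides with the scalar definition stated above, so no ambiguity arises in the one-dimensional case relevant to releasing a single Shapley value $\sv_i$.
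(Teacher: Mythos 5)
Your proof is correct and is the standard density-ratio argument for the Laplace mechanism; the paper states this theorem as cited background (from Dwork et al.) without giving its own proof, so there is nothing to diverge from. Your closing remark is also well taken: the paper's sensitivity definition $\Delta f = \max_{D,D'}\lvert f(D)-f(D')\rvert$ is written for scalar outputs, and your observation that the vector-valued case requires reading $\Delta f$ as the $\ell_1$ sensitivity $\max_{D,D'}\norm{f(D)-f(D')}_1$ is exactly the right way to make the stated theorem rigorous for general $k$, while being vacuous for the $k=1$ case actually used to release $\hat{\sv}_i$.
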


As we will discuss, one approach to releasing a privacy-preserving Shapley value is to determine its sensitivity and add the appropriate amount of noise. The challenge will be to do this while maintaining accurate estimates of the value.

\begin{table}[h]
    \centering
    \begin{tabular}{c|c|c|c}
     Symbol    & Definition & Symbol & Definition  \\ \hline
       $\sv_i$   & Shapley value of data point $i$ &
       $n$ & Data size \\
       $D, N$   & Data set &
       $S$   & Set of permutation samples \\
       $m$   & Sample size &
       $C$   & A coalition \\
       $\C$   & All coalitions &
       $\C_k$   & All coalitions of size $k$\\
       $v$   & Valuation function &
       $v_i(C)$   & Mariginal gain of $i$ over $C$ \\
       $\alpha,\beta$   & Approximation parameters &
       $\epsilon$ & Differential privacy
    \end{tabular}
    \vspace*{2mm} 
    \caption{Frequently used notations.}
    \label{tab:Notation}
\end{table}

\section{Algorithms and analysis}
\label{sec:algorithm}
We have discussed that the computation of Shapley values is expensive. Even with sampling-based approximations, large fractions of the dataset are used to answer a single query for $\sv_i$. To see this, consider a single random permutation $\pi$. With probability at least $1/2$, data point $i$ is in the second half of $\pi$. Thus with a probability of at least $1/2$, $\abs{\P^\pi_i}\geq n/2$ and at least half the dataset will be required to compute a single marginal value. Since the computation of $\sv_i$ involves many such marginal valuations, nearly the entire dataset is used to answer a query for a Shapley value. In addition to the risk of exposing all data to the agent performing the computation, the large coalition sizes create challenges in terms of differential privacy, since a query for any $\sv_i$ may reveal information about any other data point. 

In this section, we argue that by using the specific properties of the marginal loss in machine learning, we can improve upon these issues -- with an estimation algorithm that uses only a small fraction of data. In the following subsection we analyze the intuitive idea that in larger datasets, the marginal contributions of individual data points are proportionally smaller.

\subsection{Diminishing marginal gains with data} \label{sec:diminishing-return}

In this subsection, we discuss how increasing data volume reduces the marginal gain per data point (e.g.~\cite{Sun_2017_ICCV} -- Fig. 4). With increasing data, the algorithm approaches the optimal model, and the loss converges to the minimum, with tinier steps. This effect can be seen more formally in the case of empirical risk minimization using the simple setup of binary classification with $0-1$ loss~\cite{shalev2014understanding}. Given a set $C$ of size $m$  with labelled data points $(x_i, y_i)$, the empirical risk of a model $h$ is defined by $L_C(h)=\frac{1}{m} \abs{\{ i\in [m]:h(x_i)\neq y_i \}}$ -- that is, the fraction of points incorrectly classified by $h$. The models $h$ are drawn from a hypothesis class $\H$. The optimal model $h^\star_C$ in the class is the one that minimizes the loss over $C$. Since the loss is an average over the number of data points, the introduction of an additional data point $x$ can only change the risk by $O(\frac{1}{m})$: 
\begin{observation}
For any subset $C$ and new data point $x$, the marginal change in loss of the optimal model is bounded by $\abs{L_C(h^\star_C) - L_{C\cup\{x\}}(h^\star_{C\cup\{x\}})}\leq \frac{1}{m}$. 
\end{observation}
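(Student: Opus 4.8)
The plan is to establish the bound in each direction separately, exploiting in each case the optimality of one model by evaluating it on the other coalition. Write $a = L_C(h^\star_C)$ and $b = L_{C\cup\{x\}}(h^\star_{C\cup\{x\}})$, where $\abs{C}=m$, and let $k$ denote the integer number of points of $C$ misclassified by $h^\star_C$, so that $a = k/m$. The one genuine subtlety is that the two empirical losses are normalized by different denominators, $m$ versus $m+1$; one therefore cannot simply argue that ``adding a single point changes the misclassification count by at most one,'' and must instead track the change in the numerator and the change in the denominator together.

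For the direction $b - a$, I would invoke optimality of $h^\star_{C\cup\{x\}}$ on the larger set to write $b \le L_{C\cup\{x\}}(h^\star_C)$. The model $h^\star_C$ misclassifies exactly $k$ points of $C$ and either $0$ or $1$ of the new point $x$, so $L_{C\cup\{x\}}(h^\star_C) = (k+\delta)/(m+1)$ with $\delta\in\{0,1\}$, and hence $b \le (k+1)/(m+1)$. Subtracting $a=k/m$ gives $b - a \le \frac{k+1}{m+1}-\frac{k}{m} = \frac{m-k}{m(m+1)} \le \frac{1}{m+1}$, the last step using $k\ge 0$. For the opposite direction I would use optimality of $h^\star_C$ on $C$, so that $a \le L_C(h^\star_{C\cup\{x\}}) = j/m$, where $j$ is the number of points of $C$ misclassified by $h^\star_{C\cup\{x\}}$; since $b = (j+\delta')/(m+1) \ge j/(m+1)$ for some $\delta'\in\{0,1\}$, we obtain $a - b \le \frac{j}{m}-\frac{j}{m+1}=\frac{j}{m(m+1)} \le \frac{1}{m+1}$, now using $j\le m$.

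Combining the two directions yields $\abs{a-b}\le \frac{1}{m+1}\le \frac{1}{m}$, which is in fact marginally stronger than the stated claim. There is no serious obstacle here: the argument is entirely elementary, and the only point requiring care is the denominator bookkeeping together with the indicators $\delta,\delta'$ recording whether the new point $x$ happens to be correctly classified by each optimal model. Note also that the argument uses nothing about the $0$--$1$ structure beyond the facts that the loss is an average over the points of the coalition and that each per-point contribution lies in $[0,1]$, so the same sandwiching scheme should extend to any bounded loss averaged over the sample.
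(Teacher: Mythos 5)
Your proof is correct and follows essentially the same route as the paper's: both directions are handled by comparing the optimal model on one coalition against the optimal model on the other (via optimality), and then bounding how much the empirical loss of a \emph{fixed} hypothesis changes when the denominator goes from $m$ to $m+1$ and the misclassification count changes by at most one. The only difference is that your explicit bookkeeping yields the marginally sharper constant $\frac{1}{m+1}$ where the paper settles for $\frac{1}{m}$.
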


In machine learning, a natural value function $v$ is defined by the empirical (training) loss: $v(C) = - L_C(h^\star_C)$. Or, if an upper bound $L_{max}$ on $L$ is known, then possibly $v(C) = L_{max} - L_C(h^\star_C)$. In either case, for a data point $i$ and any set $C$, the observation above implies a bound on the marginal gain of $i$ w.r.t $C$: $v_i(C)\leq \frac{1}{m}$. 

\myparagraph{Regularized ERM and stability.} The $O(1/m)$ bound on marginal difference holds more generally in stable machine learning. One of the commonly used stability notions is Uniform Stability~\cite{bousquet2002stability}. Suppose the dataset $D$ of size $n$ contains points $z_i =(x_i, y_i)$ for $i\in\{1, ..., n\}$ from the domain $\mathcal{Z}=\mathcal{X} \times \mathcal{Y}$. Let  $D^{\backslash  i}$ represent $D$ with data point $i$ removed. Suppose we write $\ell(A_D, z)$ to denote the loss on a point $z$ of the model computed by algorithm $A$ on data $D$. Given this, $\gamma$-Uniform Stability ensures that the change in the loss for any datapoint $z\in\mathcal{Z}$  is bounded by $\gamma$ when any individual point is removed from the training set:
\begin{definition}[Uniform stability~\cite{bousquet2002stability}] A learning algorithm $A$ has $\gamma$-uniform stability with respect to the loss function $\ell$ if the following holds,
\[\forall z\in\mathcal Z, \forall D \in \mathcal{Z}^m, \forall i \in \{1, ..., n\}, \norm{\ell(A_{D}, z)-\ell(A_{D^{\backslash  i}}, z)}_{\infty}\leq \gamma.\]
\end{definition}

The definition implies a  $O(\frac{1}{m})$ bound on the marginal gain of $v$. For example, $\gamma=\frac{L^2 \kappa^2}{2\lambda m}$ for regularized algorithms such as L2-regularized regression in reproducing kernel Hilbert spaces with kernel $k(x, x)\leq \kappa^2$, regularization strength $\lambda$ and Lipschitz constant $L$~\cite{bousquet2002stability}. When $v$ is defined to be either the averaged empirical or validation loss, this implies that the marginal difference is bounded by $\gamma$, and so is $O(\frac{1}{m})$.

Uniformly stable algorithms are known to have strong generalization properties~\cite{bousquet2002stability,feldman2018generalization} and for this reason, are commonly used in research and practice. For example, regularized ERM methods such as linear and logistic regression with L2 regularization satisfy this property. Several other forms of regularizers and learning algorithms satisfy the property as well (See~\cite{bousquet2002stability,audiffren2013stability}). For popular techniques such as stochastic gradient descent, there has been recent progress in establishing stability. Uniform stability with $O(\frac{1}{m})$ marginal differences is known to hold for SGD in expectation even in non-convex cases~\cite{hardt2016train}. 

Next, we see how the property of $O(\frac{1}{m})$ marginal differences can be used to design improved algorithms for the Shapley value of data. 

\subsection{Layered Shapley value Algorithm}
\label{sec:layered-shapley}

Our approach to designing an efficient algorithm is to leverage the assumption that the mariginal difference in the value of a coalition $C$ on addition of any single data point $i$ is bounded by $\abs{v_i(C)}\leq \frac{c}{k}$, where $k$ is the size of $C$ and $c$ is a constant independent of $C$. 

With this assumption, we can increase the probability of small coalitions being evaluated, since the difference in value increases slowly with coalition size. The algorithm is presented as Algorithm~\ref{algo:stratified-sampling}. It operates by stratifying the coalitions into layers by their sizes, and then estimating the expected marginal gain from $i$ in each layer. The algorithm is analogous to selecting $m_k$ random coalitions from layer $k$. Observe that $m_k$ drops rapidly as the coalition sizes increase. Where $m_k$ is smaller than $1$, the algorithm is probabilistically equivalent to sampling layer $k$ with probability $m_k$. 

\begin{algorithm}[hbt]
\caption{Layered Shapley Algorithm}\label{alg:cap}
    \begin{algorithmic}[1]
    \State Input: $(\alpha,\beta)$: approximation parameters, $n$: number of points, $N$: set of points, $v$: loss function, $c$: constant in bound for marginal change
    \State Output: $\hat{\varphi}_i$: the estimated Shapley value of datum $i$
    \For{$k$ from $1$ to $n-1$} \Comment{For each layer}
        \State $m_k \leftarrow \frac{c^2}{2\alpha^2 k^2} \ln{\frac{2n}{\beta}}$
        \State $w_k \leftarrow {n-1 \choose k}$
        \State $p_k \leftarrow m_k/w_k$ \Comment{Probability of a coalition in layer $k$ being used}
        \State Draw $S_k$ where $\forall C\in\C_k, \Pr(C\in S_k) = p_k$ \Comment{Draw a sample of coalitions from layer $k$}
        \State $\hat{\phi}_i^k\leftarrow \frac{1}{p_k}\frac{1}{w_k}\sum_{C\in S_k} v_i(C)$ \Comment{Estimate of average marginal gain in Layer $k$ }
    \EndFor
    \State return $\hat{\varphi}_i = \frac{1}{n}\sum_{k=1}^{n-1} \hat{\phi}_i^k$
    \end{algorithmic}
\label{algo:stratified-sampling}
\end{algorithm}

\begin{theorem}
\label{thm:ab-log_n}
The estimate $\hat{\varphi}_i$ is an $(\alpha, \beta)$ approximation, that is, $\Pr(\abs{\hat{\sv}_i - \sv_i}\geq \alpha) \leq \beta$, and is computed using a coalition sample complexity of $O(\ln n)$. 
\end{theorem}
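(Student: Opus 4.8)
The plan is to exploit the layered decomposition of the Shapley value, prove an $(\alpha,\beta/n)$ guarantee within each layer, and then combine the layers by a union bound and the triangle inequality. First I would regroup the sum in \eqref{eq:shapley} by coalition size: writing $\phi_i^k = \frac{1}{w_k}\sum_{C\in\C_k} v_i(C)$ for the true average marginal gain in layer $k$, one obtains $\sv_i = \frac{1}{n}\sum_{k=0}^{n-1}\phi_i^k$. The size-$0$ term $\phi_i^0 = v_i(\emptyset)$ is a single deterministic evaluation that can be added exactly, so I focus on $k\geq 1$. A one-line calculation using $\Pr(C\in S_k)=p_k$ and $m_k = p_k w_k$ (so that $\frac{1}{p_k w_k}=\frac{1}{m_k}$) shows each $\hat\phi_i^k$ is unbiased for $\phi_i^k$, and hence $\hat\sv_i$ is unbiased for $\sv_i$.

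The core of the argument is a per-layer concentration bound that uses the diminishing-returns assumption $\abs{v_i(C)}\leq c/k$ for every $C\in\C_k$, established in Subsection~\ref{sec:diminishing-return}. Treating $\hat\phi_i^k$ as the average of $m_k$ i.i.d.\ samples from $\C_k$ — the regime the algorithm emulates — each sampled value lies in an interval of width $2c/k$, so Hoeffding's inequality gives
\[
\Pr\!\left(\abs{\hat\phi_i^k - \phi_i^k}\geq \alpha\right) \;\leq\; 2\exp\!\left(-\frac{2\alpha^2 m_k}{(c/k)^2}\right).
\]
Substituting $m_k = \frac{c^2}{2\alpha^2 k^2}\ln\frac{2n}{\beta}$ collapses the exponent to exactly $\ln\frac{2n}{\beta}$, yielding the clean per-layer bound $\Pr(\abs{\hat\phi_i^k - \phi_i^k}\geq \alpha)\leq \beta/n$. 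The design insight worth emphasizing is that the factor $1/k^2$ in $m_k$ precisely compensates for the $1/k$ shrinkage of the marginal gains, which enters squared in Hoeffding, so the \emph{same} accuracy $\alpha$ is purchased in every layer.

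I would then union-bound over the $n-1$ layers: with probability at least $1-(n-1)\beta/n \geq 1-\beta$, every layer satisfies $\abs{\hat\phi_i^k-\phi_i^k}\leq\alpha$ simultaneously, and on this event the triangle inequality together with the $1/n$ normalisation gives
\[
\abs{\hat\sv_i - \sv_i} \;=\; \frac{1}{n}\Bigl|\sum_{k=1}^{n-1}(\hat\phi_i^k - \phi_i^k)\Bigr| \;\leq\; \frac{1}{n}\sum_{k=1}^{n-1}\alpha \;=\; \frac{n-1}{n}\,\alpha \;<\;\alpha,
\]
establishing the $(\alpha,\beta)$ guarantee. For the sample complexity, the expected number of evaluated coalitions is $\sum_{k=1}^{n-1} m_k = \frac{c^2}{2\alpha^2}\ln\frac{2n}{\beta}\sum_{k=1}^{n-1} k^{-2}$; since $\sum_{k\geq 1} k^{-2}=\pi^2/6$ converges, this is $O(\ln n)$ for fixed $\alpha,\beta,c$, as claimed.

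The step I expect to be the main obstacle is making the per-layer concentration rigorous for the high layers, where $m_k<1$ and the algorithm does not literally draw $m_k$ samples but instead includes the whole layer with probability $m_k$ (the probabilistic equivalence noted after the algorithm). For these layers a range-based Hoeffding applied to the Bernoulli inclusion indicators degrades badly, since $\hat\phi_i^k$ is then mostly zero with rare large spikes, and a direct Bernstein substitution with the small per-layer variance $\mathrm{Var}(\hat\phi_i^k)=O(c^2/(k^2 m_k))$ only recovers the bound up to constant factors in the exponent. I would therefore handle the two regimes separately: apply the sharp $m_k$-sample Hoeffding analysis to the $O(\polylog(n))$ low layers where $m_k\geq 1$, and for the tail bound the aggregate true contribution $\frac{1}{n}\sum_{k\geq K}\abs{\phi_i^k}\leq \frac{c}{n}\sum_{k\geq K}\frac1k$ and show it is absorbed into the $\alpha$ budget, so the sparse sampling of the large coalitions cannot spoil the overall estimate. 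The boundary term $\phi_i^0$ is handled exactly as noted above.
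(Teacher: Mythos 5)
Your main argument --- regrouping the Shapley sum by coalition size, an unbiased per-layer estimator, Hoeffding with the $c/k$ range so that $m_k \propto 1/k^2$ buys a uniform per-layer accuracy $\alpha$ with failure probability $\beta/n$, a union bound over the layers, the triangle inequality with the $1/n$ normalisation, and the $\sum_{k\ge 1}k^{-2}=\pi^2/6$ bound on the expected number of evaluated coalitions --- is essentially identical to the paper's proof of this theorem. The one place you go beyond the paper is your closing caveat about layers with $m_k<1$: the paper's proof simply applies the Hoeffding bound with $m_k$ in the exponent for every layer, treating $\hat{\phi}_i^k$ as an average of $m_k$ i.i.d.\ draws even when $m_k$ is a fraction, and never addresses the Bernoulli-inclusion estimator actually used in that regime (where $\hat{\phi}_i^k$ is zero with high probability and spikes to order $c/(k\,m_k)$ otherwise). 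Your proposed two-regime fix --- the sharp $m_k$-sample analysis for the low layers, together with the observation that the aggregate true contribution of the sparsely sampled tail is only $O(c\ln n/n)$ and that with high probability no tail coalition is drawn at all --- is a sensible way to close that gap, and on this point your write-up is more careful than the proof that appears in the paper.
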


The proof of Theorem~\ref{thm:ab-log_n} is in the appendix. The proof essentially relies on Hoeffding's bound to show that the estimate $\hat\phi_i^k$ for each layer is within a small error, and uses the union bound to argue that the average error over all layers is probably small. 

We have noted earlier that access to large data volumes for each query is undesirable. The following theorem shows that on each query, the algorithm only needs to access a small fraction of data: 

\begin{theorem}
The probability that data point $j$ is used in the computation of $\varphi_i$ is bounded by $\frac{c^2\ln n}{2\alpha^2 n}\ln\frac{2n}{\beta}$. 
\end{theorem}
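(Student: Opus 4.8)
The plan is to bound the access probability by a union bound over the layers combined with a first-moment (expected-count) estimate inside each layer. Fix a data point $j\neq i$ (the point $i$ itself is always used, so the privacy-relevant case is $j\neq i$). Since the algorithm only evaluates marginal gains $v_i(C)$ on the sampled coalitions $C\subseteq N\setminus\{i\}$, the point $j$ is accessed exactly when it belongs to at least one sampled coalition. Writing $E_k$ for the event that $j$ appears in some coalition sampled from layer $k$, a union bound over the $n-1$ layers gives $\Pr(j \text{ used})\leq\sum_{k=1}^{n-1}\Pr(E_k)$.

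Next I would control each $\Pr(E_k)$ by the expected number of sampled coalitions in layer $k$ that contain $j$. Let $X_k$ count these coalitions; then $E_k=\{X_k\geq 1\}$, so Markov's inequality gives $\Pr(E_k)\leq \mathbb{E}[X_k]$. Because each $C\in\C_k$ is included with probability $p_k$ and the number of size-$k$ subsets of $N\setminus\{i\}$ containing $j$ is $\binom{n-2}{k-1}$, linearity of expectation yields $\Pr(E_k)\leq p_k\binom{n-2}{k-1}$. The key algebraic simplification is the binomial identity $\binom{n-2}{k-1}/\binom{n-1}{k}=k/(n-1)$, so that $p_k\binom{n-2}{k-1}=m_k\cdot k/(n-1)$. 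Substituting $m_k=\frac{c^2}{2\alpha^2 k^2}\ln\frac{2n}{\beta}$ then collapses the $k$ dependence to $\Pr(E_k)\leq\frac{c^2}{2\alpha^2 (n-1)\,k}\ln\frac{2n}{\beta}$.

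Summing over layers produces a harmonic series: $\Pr(j \text{ used})\leq\frac{c^2}{2\alpha^2(n-1)}\ln\frac{2n}{\beta}\sum_{k=1}^{n-1}\frac{1}{k}$. Bounding the harmonic number $H_{n-1}=\sum_{k=1}^{n-1}1/k$ by (approximately) $\ln n$ and folding $n-1$ into $n$ recovers the stated bound $\frac{c^2\ln n}{2\alpha^2 n}\ln\frac{2n}{\beta}$. Notice that independence of the coalition inclusions is never invoked: the first-moment bound on $\Pr(E_k)$ relies only on linearity of expectation, so the argument is robust to how the sampling within a layer is realized.

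The step I expect to be most delicate is this final harmonic-series estimate, since it is the one place where an exact quantity is replaced by an asymptotic one. Strictly, $H_{n-1}\leq 1+\ln(n-1)$, which slightly exceeds $\ln n$, so the clean statement holds only up to the usual lower-order and constant-factor slack. The fully honest version should therefore either state the conclusion as $O\!\left(\frac{c^2\ln n}{\alpha^2 n}\ln\frac{2n}{\beta}\right)$ or else carry the explicit $H_{n-1}$ and $(n-1)$ factors through to the end; everything preceding it is a routine counting and union-bound calculation.
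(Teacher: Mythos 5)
Your argument is essentially the same as the paper's: a first-moment bound showing that $j$ appears in a layer-$k$ sample with probability at most $m_k$ times the fraction of layer-$k$ coalitions containing $j$ (the paper computes this fraction as $\binom{n-1}{k-1}/\binom{n}{k}=k/n$, you as $\binom{n-2}{k-1}/\binom{n-1}{k}=k/(n-1)$, an immaterial difference of convention), followed by summing the resulting harmonic series over layers. Your closing caveat is well taken --- the paper itself writes $\sum_{k=1}^{n}1/k = \ln n$ as an equality, which is only correct up to the additive constant you identify, so the clean bound in the theorem statement should really carry an $O(\cdot)$ or the explicit harmonic number.
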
\label{thm:sampling-prob}

In the next section we will see how to use this result to provide differentially private Shapley values.

\subsection{Differentially Private Shapley Values}
\label{sec:dp-shapley}
Using the layered sampling approach outlined in Algorithm~\ref{algo:stratified-sampling} together with the bounded marginal contributions discussed in Section~\ref{sec:diminishing-return}, a differentially private Shapley value can be released via the Laplace mechanism~\cite{dwork2006calibrating}\footnote{Note that this approach could be trivially extended to satisfy $(\epsilon, \delta)$-differential privacy via the Gaussian mechanism}. For problems with marginal contributions bounded by $O(1/k)$, the sensitivity of the Shapley value is also bounded and can be used to ensure differential privacy. In this case, this output perturbation approach is preferable to perturbing intermediate steps of the algorithm (e.g. using private machine learning to evaluate $v$). This is due to both the necessity of composition over $2m$ evaluations (where $m$ is the total number of coalitions evaluated) in that case, as it involves evaluating 2 machine learning models per sampled coalition, and the fact that private machine learning is designed in principle to mask the differences due to a single point that are measured by the marginal contribution. Instead, we combine layered sampling with the bounded sensitivity to release the private Shapley value (Algorithm~\ref{alg:cap}).
\begin{algorithm}[hbt]
\caption{Private Layered Shapley Algorithm}\label{alg:cap}
    \begin{algorithmic}[1]
    \State Input: $(\alpha,\beta)$: approximation parameters, $n$: number of points, $N$: set of points, $v$: loss function, $\sigma$: noise scale.
    \State Output: $\hat{\varphi}_i^{priv}$: the private estimated Shapley value of datum $i$
    \State $\hat{\varphi}_i =$ Layered-Shapley($(\alpha,\beta),N,v$) \Comment{Output of Algorithm~\ref{algo:stratified-sampling}}
    \State $\hat{\varphi}_i^{priv} = \hat{\varphi}_i +r$, $r\sim Lap(0,\sigma)$
    \State return $\hat{\varphi}_i^{priv}$
    \end{algorithmic}
  \label{alg:private-shapley}
\end{algorithm}

This algorithm can be shown to be $\epsilon$-differentially private (Thm.\ref{thm:privacy}). 
\begin{theorem}\label{thm:privacy}
Algorithm~\ref{alg:private-shapley} satisfies $\epsilon$ -differential privacy with noise scale $\sigma=\frac{L^2\kappa^2}{m\lambda \ln(\frac{e^{\epsilon}-1}{p}+1)}\sum_{i=1}^n \frac{m_k}{k}$ where $p=\frac{c^2\ln n}{2\alpha^2 n}\ln\frac{n}{\beta}$. 
\end{theorem}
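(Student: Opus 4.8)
The plan is to combine three ingredients: (i) a sensitivity bound for the non-private estimate $\hat{\sv}_i$ under a change of a single data point, obtained from the uniform-stability bound $\gamma_k = \frac{L^2\kappa^2}{2\lambda k}$ on the marginal gains $v_i(C)$ for coalitions of size $k$; (ii) the Laplace Mechanism, which turns a sensitivity bound $\Delta$ into $\epsilon_0$-differential privacy at noise scale $\Delta/\epsilon_0$; and (iii) privacy amplification by subsampling, using the preceding result that any fixed point $j$ is touched by the computation with probability at most $p=\frac{c^2\ln n}{2\alpha^2 n}\ln\frac{n}{\beta}$.

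First I would fix the internal randomness (the sampled coalitions $S_1,\dots,S_{n-1}$) and bound the sensitivity of $\hat{\sv}_i$ to replacing the value of one point $j$. Changing $j$ alters $v_i(C)$ only for those sampled coalitions $C$ that contain $j$. For such a coalition in layer $k$ the model $h^\star_C$ is trained on $k$ points, so by uniform stability each of the two terms $v(C)$ and $v(C\cup\{i\})$ of the marginal shifts by $O(\gamma_k)$, giving $\abs{v_i(C)-v_i'(C)} = O(\gamma_k) = O(\frac{L^2\kappa^2}{\lambda k})$. Propagating this through the layer estimators $\hat{\phi}_i^k = \frac{1}{m_k}\sum_{C\in S_k} v_i(C)$ and the aggregate $\hat{\sv}_i = \frac{1}{n}\sum_k \hat{\phi}_i^k$, and accounting for the per-layer weights and the count of layer-$k$ coalitions that $j$ can appear in, yields a total sensitivity of the stated form $\Delta = \frac{L^2\kappa^2}{m\lambda}\sum_k \frac{m_k}{k}$, where the index runs over layers $k$, the factor $m_k/k$ couples the number of touched coalitions to the $1/k$ stability bound, and $m=\sum_k m_k$ is the total number of evaluated coalitions.

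Second, by the Laplace Mechanism, adding $r\sim Lap(0,\Delta/\epsilon_0)$ makes the estimate $\epsilon_0$-differentially private with respect to $j$ on every realization in which $j$ participates. Third, I would invoke amplification by subsampling: since $j$ participates with probability at most $p$, a mechanism that is $\epsilon_0$-DP on the participating data is $\epsilon$-DP overall with $e^{\epsilon} = 1 + p(e^{\epsilon_0}-1)$. Solving for the base level gives $\epsilon_0 = \ln\!\big(\tfrac{e^{\epsilon}-1}{p}+1\big)$, and substituting into $\sigma = \Delta/\epsilon_0$ reproduces the claimed noise scale $\sigma = \frac{L^2\kappa^2}{m\lambda\,\ln(\frac{e^{\epsilon}-1}{p}+1)}\sum_k \frac{m_k}{k}$.

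The main obstacle is the interface between the sensitivity step and the amplification step. Standard amplification-by-subsampling theorems assume an explicit Bernoulli/Poisson subsampling of records, whereas here the participation of $j$ is an induced event determined jointly by the coalition draws across all layers, with complex correlations and only an upper bound $p$ on its probability. Making this rigorous requires either showing the induced participation behaves like independent subsampling with parameter $p$, or appealing to a general amplification bound that only needs a bound on the participation probability; one must further verify that $\epsilon_0$ is the worst-case \emph{conditional} privacy over all realizations in which $j$ is used, so that the sensitivity $\Delta$ fed into Laplace is an upper bound uniformly over those realizations. A secondary technical point is tracking the exact constant: confirming that the uniform-stability bound applies to both $v(C)$ and $v(C\cup\{i\})$ and that the layerwise weights collapse to $\frac{1}{m}\sum_k m_k/k$.
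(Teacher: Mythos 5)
Your proposal follows essentially the same route as the paper's proof: bound the sensitivity of $\hat{\sv}_i$ via the uniform-stability constant $\gamma_k \propto \frac{L^2\kappa^2}{\lambda k}$ applied to both $v(C)$ and $v(C\cup\{i\})$, invoke the Laplace mechanism at a base level $\epsilon_0$, and then use amplification by sampling with the participation bound $p$ to solve $\epsilon_0 = \ln\bigl(\frac{e^{\epsilon}-1}{p}+1\bigr)$. The only notable difference is cosmetic: the paper bounds every sampled coalition by $\gamma_k$ (rather than only those containing the changed point) and thereby arrives at the tighter expression $\frac{L^2\kappa^2}{n\lambda}\sum_k \frac{1}{k}$, remarking that the form in the theorem statement (which you reproduce) is asymptotically looser; your flagged concern about rigorously justifying subsampling amplification for the induced participation event is a fair one that the paper itself does not address.
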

The proof first demonstrates that the sensitivity of the approximated Shapley value is given by $\frac{L^2\kappa^2}{m\lambda}\sum_{i=1}^n \frac{m_k}{k}$ and then makes use of the fact that any data point has a small probability $p$ of being used. This allows us to use results of privacy amplification by sampling~\cite{ kellaris2013practical, beimel2014bounds, balle2018privacy} to obtain differential privacy without excessive noise.

\subsection{Properties and other observations} 
\label{sec:layered-properties}

\myparagraph{Computation and data access costs.} Compared to algorithms that use a large number of coalition samples and almost all data points, the Layered Shapley approach works with $O(\polylog(n))$ data points. This is a system advantage, since accessing large datasets can incur many disk/ network/ device access costs. 

Computationally, the small data requirement implies that the average coalition is only $O(\polylog(n))$ in size. Since a training algorithm needs to run for each coalition, this gives a large advantage. For example, assuming that the training algorithm in question runs in $\approx \mathrm{poly}(n)$ time, one of the traditional approximation algorithms that require $\Omega(n)$ data points, will require a $\Omega(\mathrm{poly}(n))$ running cost. Whereas, the Layered algorithm will run in $O(\polylog(n))$ time. 

\myparagraph{Small sets for evaluations.} The $S_k$ sets generated on a run of the algorithm can be saved and treated like a core set -- a small sample of a large dataset that serves to approximate results for future queries. In such a setup, each contributor needs to submit only a small fraction of their data for the general service of data evaluation. This is more privacy-friendly and likely to be acceptable for both individuals and institutions.

\myparagraph{Valuation of data subsets.} In practice, it is likely that a single contributor submits multiple data points, and the point of interest is that the total or average value (and corresponding compensation) is accurate. 

If a person submits $w$ data points, then it follows from a simple probabilistic analysis, that to ensure that the average cost is within an error of $\alpha$, a sample complexity if $O(\ln(nw))$ suffices. Thus, multiple data contributions and queries for subsets effectively decrease the samples and costs.

\section{Experiments}\label{sec:experiments}
We now provide empirical results demonstrating the efficacy of our algorithms on binary classification tasks with regularized logistic regression.

\textbf{Experimental Setup:} Experiments were performed using both publicly available binary classification datasets and synthetic data matching the dataset used by~\cite{ghorbani2019data}.\footnote{The code used in these experiments is an extension of \url{https://github.com/amiratag/DataShapley}~\cite{ghorbani2019data} which is licenced under the MIT License(See \url{https://github.com/amiratag/DataShapley/blob/master/README.md}). The Adult dataset is licensed under a Creative Commons Attribution 4.0 International (CC BY 4.0) license and the Diabetes dataset under the Creative Commons Attribution 1.0 Universal (CC0 1.0) license.} The publicly available datasets used were the Adult dataset~\cite{kohavi1996scaling} and the Diabetes dataset from the UCI Machine Learning Repository~\cite{frank2010uci}. The synthetic data was generated by following the synthetic data generation approach of~\cite{ghorbani2019data} including sampling features from a 50-dimensional multidimensional Gaussian distribution $\mathcal{N}(0, I)$. All experiments use the Scikit-Learn~\cite{pedregosa2011scikit} implementation of regularized logistic regression and the appropriate noise scale for $\epsilon=1$. Private algorithm performance was reported as an average over 5 runs. In these experiments $v$ is defined to be the negative heldout loss and coalitions with $v$ below the random guessing baseline were not included. See the Appendix for further experimental details.

\begin{figure}[htp]
     \centering
     \begin{subfigure}[b]{0.3\textwidth}
         \centering
         \includegraphics[width=\textwidth]{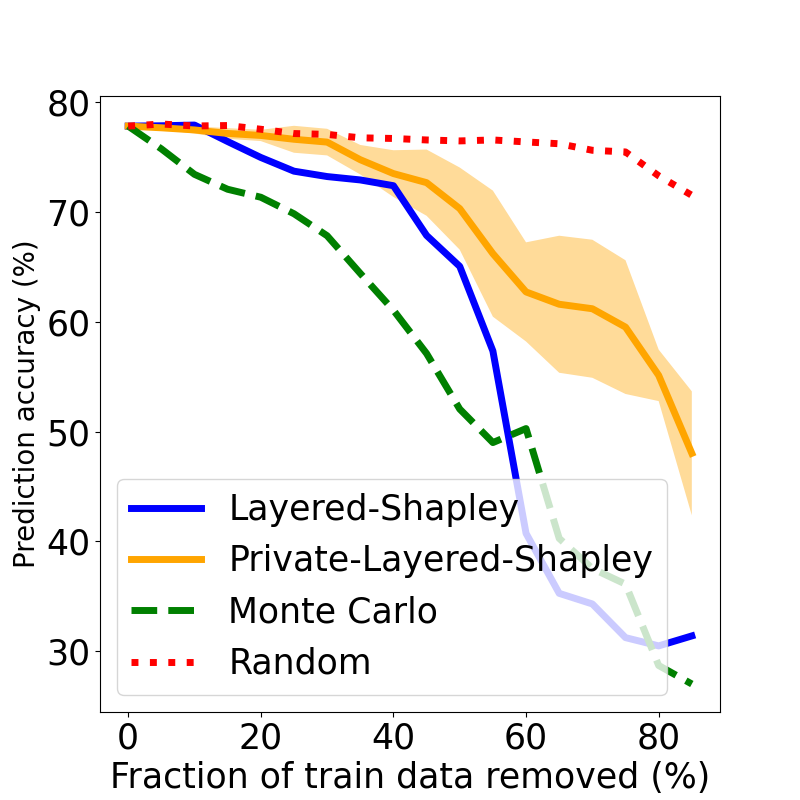}
         \caption{Adult ($\rho=0.61$)}
         \label{fig:y equals x}
     \end{subfigure}
     \hfill
     \begin{subfigure}[b]{0.3\textwidth}
         \centering
         \includegraphics[width=\textwidth]{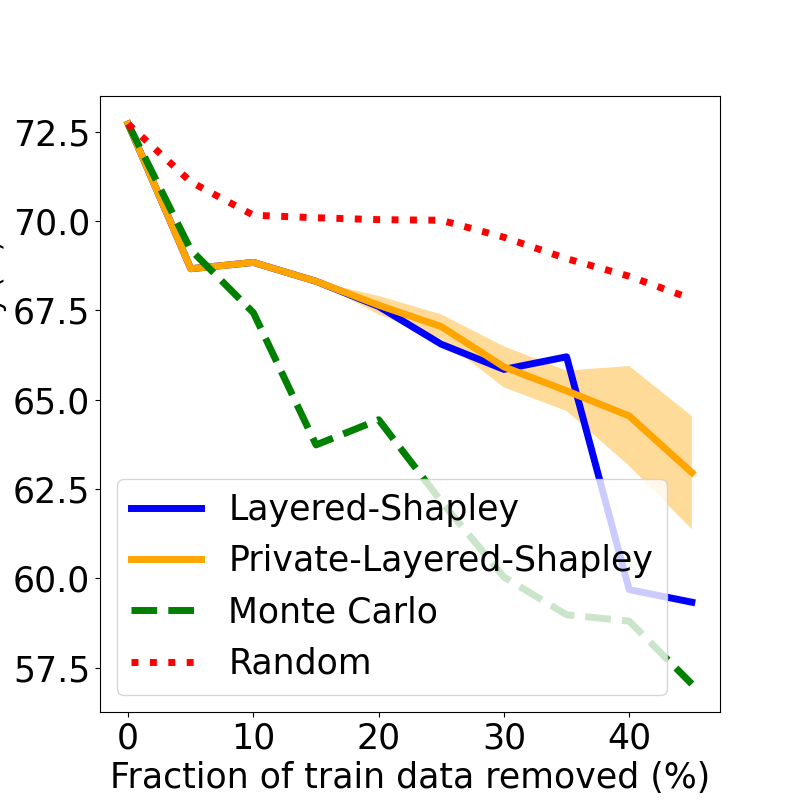}
         \caption{Diabetes ($\rho=0.89$)}
         \label{fig:three sin x}
     \end{subfigure}
     \hfill
     \begin{subfigure}[b]{0.3\textwidth}
         \centering
         \includegraphics[width=0.98\textwidth]{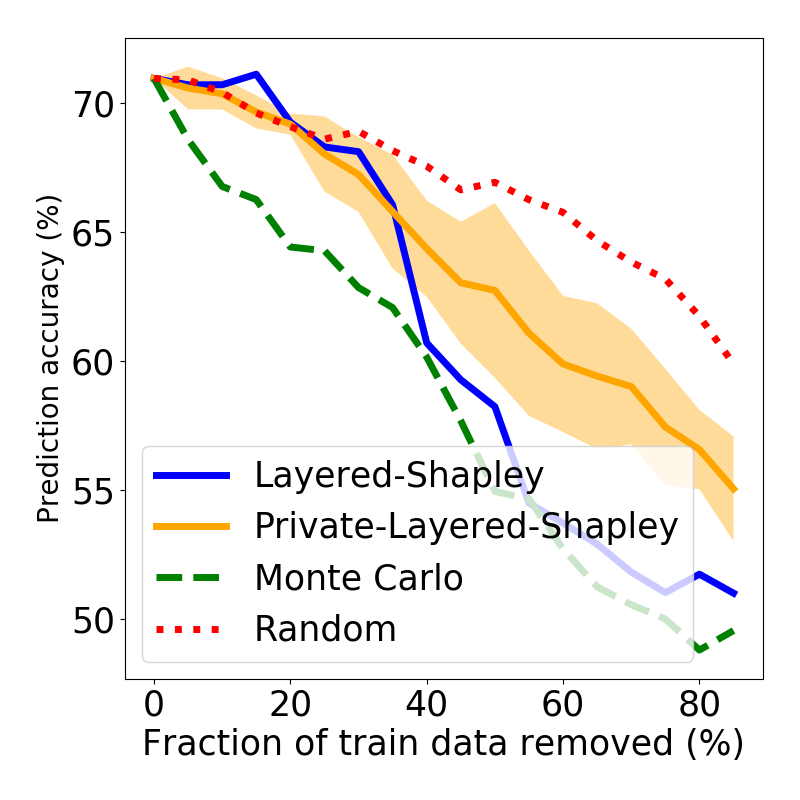}
         \vspace{-3mm}
         \caption{Synthetic Dataset ($\rho=0.52$)}
         
         \label{fig:five over x}
     \end{subfigure}
     \begin{subfigure}[b]{0.3\textwidth}
         \centering
         \includegraphics[width=\textwidth]{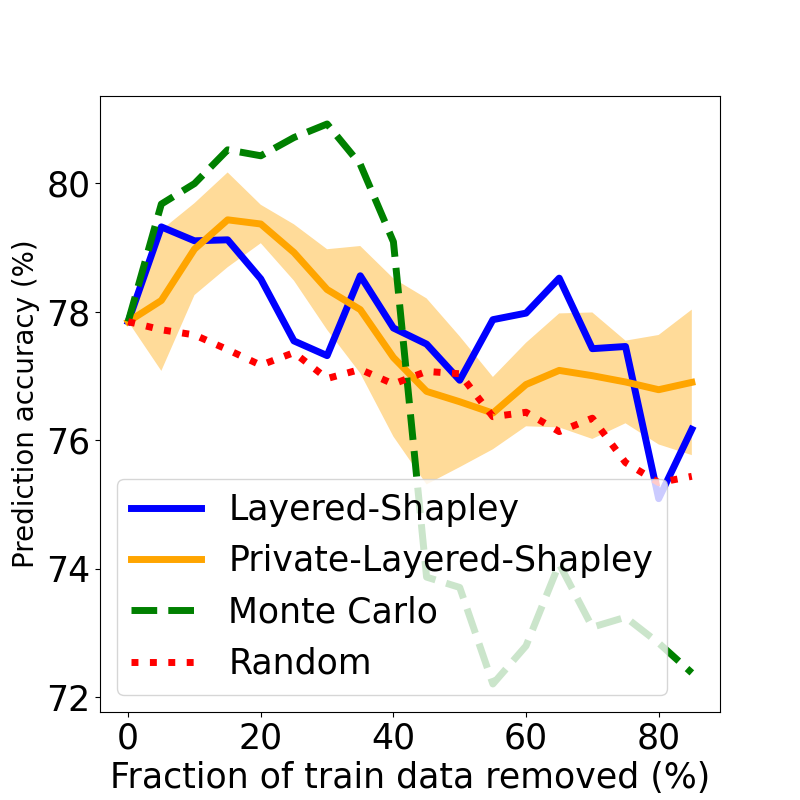}
         \caption{Adult ($\rho=0.61$)}
         \label{fig:y equals x}
     \end{subfigure}
     \hfill
     \begin{subfigure}[b]{0.3\textwidth}
         \centering
         \includegraphics[width=\textwidth]{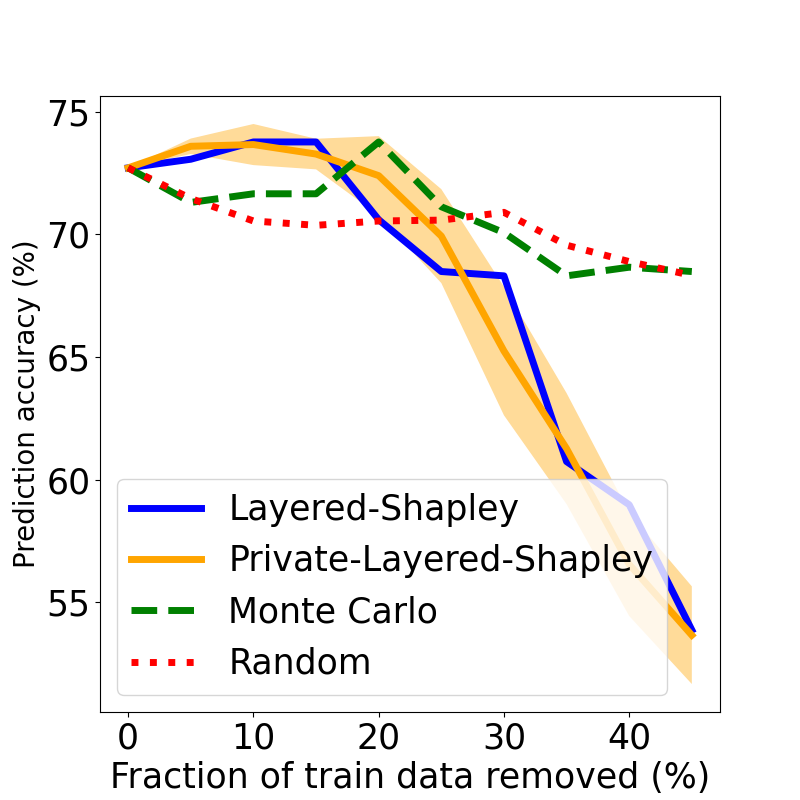}
         \caption{Diabetes ($\rho=0.89$)}
         \label{fig:three sin x}
     \end{subfigure}
     \hfill
     \begin{subfigure}[b]{0.3\textwidth}
         \centering
         \includegraphics[width=0.98\textwidth]{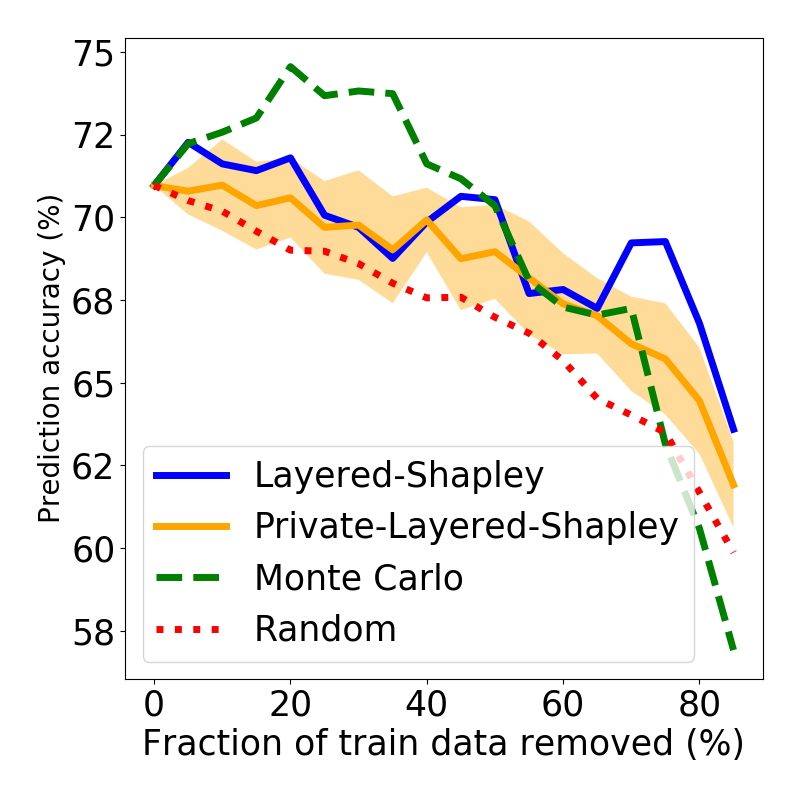}
        \vspace{-3mm}

         \caption{Synthetic Dataset ($\rho=0.52$)}
         \label{fig:five over x}
     \end{subfigure}
        \caption{Shapley value performance demonstrated by the change in accuracy due to removing points ranked by their Shapley value (as opposed to randomly). The top row shows the performance change if the points with the highest Shapley value points are removed first. We expect meaningful Shapley values to result in lines below the red dotted random lines in this case. The bottom row shows the performance change if the smallest Shapley value points are removed first.  We expect meaningful Shapley values to result in lines above the red dotted random lines in this case. The green line shows the performance of the Monte Carlo approximation of Shapley values~\cite{maleki2013bounding}}
        \label{fig:results}
\end{figure}
\textbf{Results:} As shown by Figure~\ref{fig:results}, our private and non-private Shapley value algorithms identify valuable datapoints. In the top row of Figure~\ref{fig:results} removing datapoints with high private or non-private Shapley values results in a faster drop in classifier accuracy in comparison to removing the same number of randomly selected datapoints via the random Shapley value baseline. This implies that the identified points are of higher value to the accuracy of the classifier than randomly selected points. On the bottom row, we see that removing low value points results in a gain in accuracy initially before dropping more slowly in comparison to removing randomly selected points. For the Diabetes dataset, this holds for the bottom 20\% of data only whereas for the other datasets it holds more generally. Overall, this implies that the lowest Shapley value points are those with low value for the classifier in the sense that removing them helps performance.  Together, these results imply that the private Shapley values obtained contain meaningful information about the value of a given datapoint for a classifier, even in the case of differential privacy with $\epsilon=1$. 

We also report the Spearman Rank Correlation $\rho$ between the private and non-private Layered Shapley values. The minimum value for this correlation coefficient is $-1$ implying a perfect negative correlation, $0$ implies no correlation and $1$ implies positive correlation. As the obtained values are significantly above $0$, e.g. an averaged value of $0.61$ or $0.89$ for the Adult and Diabetes datasets respectively, this implies that the private values largely conserve the approximate rank (note that preserving rank precisely will contradict privacy).

\section{Related Work}
\label{sec:related}
\textbf{Data Valuation.} Data valuation is a relatively new field and has not been widely addressed until recent years~\cite{jia2019empirical,ghorbani2019data, ghorbani2020distributional, yoon2020data,gonzalez2017fdvt,wei2020efficient}. One main intuition is to value the contribution of different data sources after a model is learned. The valuation can be further used to, for example, make a reasonable payment to each data contributor, which has been discussed and applied in crowdsourcing and federated learning~\cite{jia2019empirical,ghorbani2019data,wei2020efficient}. There are several algorithms for evaluating the data points such as leave-one-out testing~\cite{cook1977detection}, influence function estimation~\cite{koh2017understanding,sharchilev2018finding,datta2015influence,pruthi2020estimating}, and core sets \cite{dasgupta2009sampling}. However, the purposes of these works are mostly for model explanation or stability improvement of models. For tasks such as rewarding the data contributors require additional properties such as fairness or privacy~\cite{li2020federated,li2021survey}. On the other hand, Shapley value as a data valuation method provides axiomatic fairness properties ~\cite{jia2019empirical}.

\textbf{Shapley value in machine learning.} Shapley value has found numerous applications in machine learning~\cite{rozemberczki2022shapley}. Due to the hardness of the computation of exact Shapley values, approximation algorithms for Shapley values are heavily discussed. Maleki, et al.~\cite{maleki2013bounding} provide a general bound on Shapley value with Monte Carlo sampling and show the efficiency of stratified sampling under certain assumptions. In data evaluation applications, Ghorbani, et al. proposed a framework for utilizing Shapley value in a data-sharing system~\cite{ghorbani2019data}. Jia, et al. advanced the work with more detail and several efficient algorithms to approximate the Shapley value under different assumptions~\cite{jia2019empirical}. The distributional Shapley value also has been discussed in~\cite{ghorbani2020distributional,kwon2021efficient}, to address incremental updates to Shapley values, which is difficult under Monte Carlo approximation methods. Their methods calculate the Shapley value over a distribution, without revealing the true Shapley value in the output. 

Several works have explored the use of Shapley values in feature selection and importance. Here, the Shapley values of features quantify how much individual features contribute to the model's performance on a set of data points~\cite{guyon2003introduction,fryer2021shapley, cohen,patel2021game,tripathi2020interpretable,sun2012feature,guha2021cga,williamson2020efficient}. Several different approximation approaches have been proposed for the feature Shapley value. Cohen~\cite{cohen2007feature} assumes the number of interactions between features is significantly smaller than the combinatorial number among all features and derives the Shapley value via coalition sets with only constant sizes. Other works use the variable importance measure (VIM) to quantify the predictive value of each feature, which is called Shapley Population Variable Importance Measure (SPVIM) and can be estimated in $\Theta(d)$ time, where $d$ is the number of features~\cite{williamson2020efficient,covert2021improving}. In general, Shapley value has been widely used as a scoring mechanism in interpretable machine learning~\cite{rozemberczki2022shapley}.

In comparison, our work focuses on the differentially private Shapley values of data points. To the best of our knowledge, this is the first work addressing the differential privacy of data point valuation. Shapley value of data points is a particularly challenging matter since datasets can be large and a single value computation requires many evaluations. Our algorithm operates using smaller samples of data points to obtain privacy-compatible results.

\section{Conclusion}

We address the privacy issue of Shapley value-based data valuation and propose the Layered Shapley value algorithm. The algorithm preserves differential privacy and utilizes the diminishing marginal gain to provide efficient computation. The theoretical bound does not extend to algorithms without uniform stability, such as training neural networks. Stability results in~\cite{hardt2016train} that hold for SGD in expectation suggest that suitable results may be derived in the future. In experiments, we find that both differentially private and non-private Shapley values computed by our algorithm are still useful compared with the baseline. 

We imagine a system where individuals can easily obtain valuations of their data. The theoretical results in this paper provide algorithms, but we are still far from widely usable systems. A major challenge in such a system will be to obtain meaningful value (e.g. compensation) instead of abstract numbers, which will be hard to translate to social value. In such real systems, Shapley value may or may not be the right approach. Its axiomatic properties are often cited as the reason to use it, but to what extent these hold in the Monte Carlo approximations remain to be investigated. It is also unclear if, in the case of data contributions, Shapley values agree with the human intuition of value. A few works have suggested that people may overlook the properties of the Shapley value and have incorrect expectations of it~\cite{kumar2020problems,fryer2021shapley}.  

\myparagraph{Social impact.} While this work contributes to the domain of ethical machine learning research by extending data valuation techniques to include privacy-preserving valuation, its social impacts can include negative elements. Both Shapley value and differential privacy are non-trivial concepts, and it is unclear if a system combining the two actually helps people, in general, make better decisions, or will simply add to greater uncertainty and fear of technology. Differential privacy, for example, does not provide absolute privacy, but rather a probabilistic one and is dependent on $\epsilon$, which may be a source of misunderstanding. Differential privacy is also known to have disparate impact~\cite{bagdasaryan2019differential,ganev2021robin}, and it is unclear if in this case, the algorithm will maintain the fairness of valuations. 

The use of such data valuation services themselves may be susceptible to attacks, frauds, and abuse. Unethical players may contribute spurious data points with the objective of increasing their own values or disrupting that of others. Valuation systems may perpetuate fraud and introduce the issue of monitoring the agent performing valuations. Leaked data valuations may make high-value data holders subject to attacks and fraud. The idea of incentivizing the contribution of (high value) data, while useful in theory, comes with some potential for abuse. Depending on the circumstances, it can be seen as a coercion to contribute data or a penalty for not contributing data. Specifically, high-value data is also likely to be privacy sensitive, and the incentives can be seen as a push toward loss of privacy.

\bibliography{ref}
\bibliographystyle{abbrv}
\appendix

\section{Proof of Observation 3.1}
First observe that for any $h$, $\abs{L_C(h) - L_{C\cup\{x\}}(h)}\leq \frac{1}{m}$. This is because, if $x$ is correctly classified by $h$ then the difference of the two losses is $\frac{w}{m} - \frac{w}{m+1}$ where $w$ is the number of incorrect classifications. Since $w\leq m$, the difference is at most $1/m$. On the other hand, if $x$ is classified incorrectly, then the difference is $\frac{w}{m} - \frac{w+1}{m+1}\leq 1/m$. 

In the case when with introduction of $x$ the optimal hypothesis does not change, that is $h^\star_{C\cup\{x\}} = h^\star_C$, the observation above applies directly and the difference is at most $\frac{1}{m}$. 

Now, in the event when $h^\star_{C\cup\{x\}} \neq h^\star_C$, we consider two cases. Case 1, is if  $L_C(h^\star_C) \leq L_{C\cup\{x\}}(h^\star_{C\cup\{x\}})$. We know that $L_{C\cup\{x\}}(h^\star_{C\cup\{x\}})\leq L_{C\cup\{x\}}(h^\star_C)$ and that $L_{C\cup\{x\}}(h^\star_C) - L_C(h^\star_C)\leq \frac{1}{m}$. Therefore $ L_{C\cup\{x\}}(h^\star_{C\cup\{x\}}) - L_C(h^\star_C)\leq \frac{1}{m}$. Similarly, in case 2: $L_{C\cup\{x\}}(h^\star_{C\cup\{x\}})\leq L_C(h^\star_C)$, we know that $L_C(h^\star_C)\leq L_C(h^\star_{C\cup\{x\}})$. Which implies that  $L_C(h^\star_C) - L_{C\cup\{x\}}(h^\star_{C\cup\{x\}})\leq \frac{1}{m}$.

\section{Proof of Theorem 3.2}
 In a sample from layer $k$, the probability that a coalition containing $j$ is used is: $\frac{{n-1 \choose k-1}}{{n \choose k}} = \frac{k}{n}$. 

Thus, given $m_k= \frac{c^2}{2\alpha^2 k^2} \ln{\frac{2n}{\beta}}$ expected number of samples at layer $k$ (see discussion above), the probability that $j$ appears in a sampled coalition from stratum $k$ is $P(j|k) \leq \frac{c^2}{2\alpha^2 kn}\ln\frac{2n}{\beta}$. 
Thus, over the all $n$ strata, the probability
\begin{align*}
    P(j) & \leq \sum_{k=1}^n \frac{c^2}{2\alpha^2 kn}\ln\frac{2n}{\beta} \\
         & = \frac{c^2}{2\alpha^2 n}\ln\frac{2n}{\beta} \sum_{k=1}^n \frac{1}{k}\\
         & = \frac{c^2\ln n}{2\alpha^2 n}\ln\frac{2n}{\beta}
\end{align*}

\section{Proof of Theorem 3.3.}
We first show the correctness of $(\alpha,\beta)-$approximation.

\begin{lemma}
The estimate of shapley value $\hat{\varphi}_i$ is an $\alpha, \beta$ approximation of the true shapley value $\varphi_i$. That is, $\Pr(\abs{\hat{\varphi}_i-\varphi_i}\geq \alpha) \leq \beta$. \label{lem:sv-approx}
\end{lemma}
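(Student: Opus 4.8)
The plan is to decompose both the true Shapley value and its estimate layer by layer, control each layer separately with a concentration inequality, and then recombine the layers by a union bound. First I would define the exact per-layer mean $\phi_i^k = \frac{1}{w_k}\sum_{C\in\C_k} v_i(C)$, the uniform average of the marginal gain over all size-$k$ subsets of $N\setminus\{i\}$. Grouping the terms of \eqref{eq:shapley} by coalition size and using ${n-1\choose k} = w_k = \abs{\C_k}$, the true value factors as $\sv_i = \frac{1}{n}\sum_{k=0}^{n-1}\phi_i^k$. The $k=0$ term is the single empty coalition and can be evaluated exactly, so it contributes no error; the randomness lives entirely in the layers $k\geq 1$ that the algorithm samples. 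A direct expectation computation shows each $\hat{\phi}_i^k$ is unbiased, $\mathbb{E}[\hat{\phi}_i^k] = \phi_i^k$, under either reading of the sampling step, since drawing a coalition with probability $p_k$ and reweighting by $\frac{1}{p_k w_k} = \frac{1}{m_k}$ recovers the uniform average.

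The heart of the argument is a per-layer concentration bound. Reading the estimator as an average of $m_k$ coalitions drawn uniformly from layer $k$ (as the surrounding discussion states), each summand $v_i(C)$ lies in an interval of width $c/k$ by the marginal bound $\abs{v_i(C)}\leq c/k$. Hoeffding's inequality then gives $\Pr(\abs{\hat{\phi}_i^k - \phi_i^k}\geq\alpha)\leq 2\exp\!\left(-\frac{2 m_k \alpha^2 k^2}{c^2}\right)$. The choice $m_k = \frac{c^2}{2\alpha^2 k^2}\ln\frac{2n}{\beta}$ is tuned precisely so that the $k^2$ coming from the squared range cancels the $1/k^2$ inside $m_k$; substituting collapses the bound to $2\exp(-\ln\frac{2n}{\beta}) = \frac{\beta}{n}$, a failure probability identical for every layer.

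I would then take a union bound over the $n-1$ sampled layers to conclude that, with probability at least $1-\beta$, every layer estimate satisfies $\abs{\hat{\phi}_i^k - \phi_i^k} < \alpha$ simultaneously. On this event the triangle inequality finishes the proof: $\abs{\hat{\sv}_i - \sv_i} = \frac{1}{n}\abs{\sum_k(\hat{\phi}_i^k - \phi_i^k)} \leq \frac{1}{n}\sum_k\abs{\hat{\phi}_i^k - \phi_i^k} \leq \frac{n-1}{n}\alpha \leq \alpha$, which is exactly the desired $(\alpha,\beta)$ guarantee.

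The step I expect to be the main obstacle is justifying the clean Hoeffding bound, i.e. reconciling the algorithm's per-coalition Bernoulli sampling ($\Pr(C\in S_k)=p_k$ independently) with the i.i.d. "average of $m_k$ samples" view that makes the effective range equal to $c/k$. If one instead applies Hoeffding directly to the importance-weighted summands $\frac{v_i(C)}{p_k w_k}$ under Bernoulli sampling, the individual terms can be as large as $\frac{c}{k m_k}$ and the sum of their squared ranges scales like $w_k$, which is exponentially large in the middle layers and destroys the bound; so the fixed-size uniform-sampling interpretation is essential for a layer-independent failure probability. I would also dispose of two edge cases in passing: whenever $p_k\geq 1$ the entire layer is enumerated and $\hat{\phi}_i^k = \phi_i^k$ exactly (contributing zero error and only helping the bound), and the omitted $k=0$ term is likewise exact.
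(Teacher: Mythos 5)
Your proposal follows essentially the same route as the paper's proof: decompose $\sv_i$ into per-layer means $\phi_i^k$, apply Hoeffding to each layer estimate with range $c/k$ so that the choice of $m_k$ yields a per-layer failure probability of $\beta/n$, union bound over layers, and finish with the triangle inequality. Your remark about reconciling the algorithm's independent Bernoulli sampling with the fixed-size i.i.d.\ sampling view is a legitimate subtlety that the paper's proof also passes over silently (it likewise writes $\hat{\phi}_i^k = \frac{1}{m_k}\sum_{C\in S_k} v_i(C)$ and treats it as an average of $m_k$ uniform draws), so your resolution matches the paper's implicit reading.
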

\begin{proof}
Consider the estimate $\hat{\phi}_i^k$ at any layer $k$. By Algorithm 1, $\hat{\phi}_i^k = \frac{1}{m_k}\sum_{C\in S_k}v_i(C)$.

Since  $E[\hat{\phi}_i^k] = \phi_i^k$, using the Chernoff-Hoeffding bound, we have $\Pr(\abs{\hat{\phi}_i^k-\phi_i^k}\geq \alpha)\leq \Pr(\abs{m_k\hat{\phi}_i^k-m_k\phi_i^k}\geq m_k\alpha) \leq 2 \exp \left( -\frac{2\alpha^2m_k^2}{m_k\frac{c^2}{k^2}} \right)$. Substituting the expression for $m_k$, we get that $\Pr(\abs{\hat{\phi}_i^k-\phi_i^k}\geq \alpha)\leq \frac{\beta}{n}$.

By union bound, the probability that $\exists k: \Pr(\abs{\hat{\phi}_i^k - \phi_i^k}\geq\alpha) \leq \sum_{k=1}^n \Pr(\abs{\hat{\phi}_i^k-\phi_i^k}\geq \alpha) \leq \beta$

Observe the event $\sum_{k=1}^n\abs{\hat{\phi}_i^k-\phi_i^k} \geq n\alpha$ requires that $\exists k: \Pr(\abs{\hat{\phi}_i^k - \phi_i^k}\geq\alpha)$. Thus: 
\begin{align*}
    \Pr(\sum_{k=1}^n\abs{\hat{\phi}_i^k-\phi_i^k} \geq n\alpha) \leq \beta \\
    \implies \Pr(\frac{1}{n}\sum_{k=1}^n\abs{\hat{\phi}_i^k-\phi_i^k} \geq \alpha) \leq \beta
\end{align*}

Now, we can rewrite ${\hat{\varphi}_i - \varphi_i}$ as $\frac{1}{n}\sum_{k=1}^n(\hat{\phi}_i^k - \phi_i^k)$. Thus: 
\begin{align*}
\abs{\hat{\varphi}_i - \varphi_i} = \abs{\frac{1}{n}\sum_{k=1}^n(\hat{\phi}_i^k - \phi_i^k)} & \leq \frac{1}{n}\sum_{k=1}^n \abs{(\hat{\phi}_i^k - \phi_i^k)} \\
\implies \Pr(\abs{\hat{\varphi}_i - \varphi_i}\geq \alpha)  \leq \Pr(\frac{1}{n}\sum_{k=1}^n \abs{(\hat{\phi}_i^k - \phi_i^k)}) & \leq \beta. 
\end{align*}
\end{proof}

Now observe that since each of $w_k$ items in layer $k$ is sampled with probability $p_k$, the expected number of samples in layer $k$ is $m_k$. The sample complexity follows from the summation of the sample complexity of the $n$ layers. That is, the sample complexity $m=\sum_{k=1}^n m_k = \sum_{k=1}^n \frac{c^2}{2\alpha^2 k^2}\ln\frac{2n}{\beta} \leq \frac{c^2}{2\alpha^2}\ln\frac{2n}{\beta}\cdot \frac{\pi^2}{6}$. Combining with Lemma~\ref{lem:sv-approx} gives us the theorem.

\section{Proof of Theorem 3.5 }

Denote the set of all sampled coalitions of size $k$ in Algorithm 1 by $\C_k$ and let $m_k=|\C_k|$. Assume that that $v$ is the loss function $\ell(\cdot)$ and that in a given marginal contribution evaluation, the learning algorithm uses a sampled coalition $j \in \C_k$ of size $k$ as its training set. Denote the uniform stability of the learning algorithm using a training set with datasize $k$ by $\gamma_{k}$. Suppose $D'$ is a neighbouring dataset of $D$, differing in at most a single point that is not the point $x_i$ being evaluated. 

When coalition sample $j$ has $k$ data points, let us denote the set of points in it by  $D_{(k,j)} \subseteq D$. Any neighbor of it is written as $D'_{(k,j)} \subseteq D'$ respectively. $D'_{(k,j)}$ and  $D_{(k,j)}$ can differ in at most a single datapoint and so  are also neighbouring datasets. 

Due to the Laplace  Mechanism~\cite{dwork2006calibrating}, noise of scale $\frac{\Delta \hat{\varphi}_i}{\epsilon}$ will suffice to guarantee $\epsilon$-differential privacy, where $\Delta \hat{\varphi}_i$ is the sensitivity of the estimated Shapley Value. By stating $\hat{\varphi}_i$ in terms of the marginal contributions, we can bound the sensitivity as follows,
\begin{align*}
\Delta \hat{\varphi}_i &=& \max_{D, D'} \norm[\Big]{\frac{1}{n} \sum_{k=1}^{n-1} \frac{1}{m_k} \sum_{j \in \C_k}\left(v(D_{(k,j)} \cup x_i)-v(D_{(k,j)} \right) \\
& & - \frac{1}{n}\sum_{k=1}^{n-1}\frac{1}{m_k} \sum_{j \in \C_k}\left(v(D'_{(k,j)} \cup x_i)-v(D'_{(k,j)}) \right)} \\
 &=& \max_{D, D'} \norm[\Big]{\frac{1}{n} \sum_{k=1}^{n-1} \frac{1}{m_k} \sum_{j \in \C_k}\left(v(D_{(k,j)} \cup x_i)-v(D'_{(k,j)} \cup x_i) \right) \\
 && + \frac{1}{n}\sum_{k=1}^{n-1}\frac{1}{m_k} \sum_{j\in \C_k}\left(v(D'_{(k,j)}) -v(D_{(k,j)})\right)} \\
 &\leq& \max_{D, D'} \norm[\Big]{\frac{1}{n} \sum_{k=1}^{n-1} \frac{1}{m_k} \sum_{j \in \C_k}\left(v(D_{(k,j)} \cup x_i)-v(D'_{(k,j)} \cup x_i) \right)} \\
 & & +\max_{D, D'} \norm[\Big]{ \frac{1}{n}\sum_{k=1}^{n-1}\frac{1}{m_k} \sum_{j \in \C_k}\left(v(D'_{(k,j)}) -v(D_{(k,j)})\right)} \\
  &\leq& \frac{1}{n} \sum_{k=1}^{n-1} \frac{1}{m_k} \sum_{j \in \C_k}\gamma_{k+1} + \frac{1}{n}\sum_{k=1}^{n-1}\frac{1}{m_k} \sum_{j \in \C_k}\gamma_{k}  \\
   &\leq& \frac{2}{n} \sum_{k=1}^{n-1} \frac{1}{m_k} \sum_{j \in C_k}\gamma_{k}   \\
   &=& \frac{2}{n} \sum_{k=1}^{n-1} \gamma_{k} \\
   &=&  \frac{L^2\kappa^2}{n \lambda} \sum_{k=1}^{n-1} \frac{1}{k}
\end{align*}
The last three inequalities follow from the fact that the sensitivity is $\propto \frac{1}{k}$ for a coalition of size $k$ and use the uniform stability bound for regularized algorithms given by~\cite{bousquet2002stability}.

Finally, due to Theorem 3.4 and amplification by sampling~\cite{ kellaris2013practical, beimel2014bounds, balle2018privacy}, Laplace noise with scale $ \frac{L^2\kappa^2}{n \lambda \ln(\frac{e^{\epsilon}-1}{p}+1)} \sum_{k=1}^{n-1} \frac{1}{k}$ suffices with $p=\frac{c^2\ln n}{2\alpha^2 n}\ln\frac{2n}{\beta}$.

Note that this sensitivity and noise scale are asymptotically better than what was stated in the theorem statement, since in our algorithm, $m\in O(\polylog(n))$. The main body of the paper will be updated in the camera ready version.

\section{Further Experimental Details}

\textbf{Parameters:} Experiments use the following settings: $\epsilon=1$, $\alpha=0.05$, $\beta=0.05$, $\lambda=1.0$ and $|D|=100$. The data is normalized to the range $(0,1)$ in order to bound $\kappa$.

\textbf{Compute:} The compute requirements of these experiments were low, all experiments were run on laptops using 2.9 GHz Quad-Core Intel Core i7 processors.

\end{document}